\relax
\documentclass[letterpaper]{article}
\usepackage{aaai16}
\usepackage{times}
\usepackage{helvet}
\usepackage{courier}
\frenchspacing
\setlength{\pdfpagewidth}{8.5in}
\setlength{\pdfpageheight}{11in}
\usepackage{url}
\usepackage[round]{natbib}
\usepackage{amsmath,amsfonts,amssymb,bm,graphicx,amsthm}
\usepackage{subfig}
\usepackage{wrapfig}
\usepackage{dblfloatfix}

\setlength{\abovecaptionskip}{2pt}
\setlength{\belowcaptionskip}{2pt}
\setlength{\floatsep}{2pt}
\setlength{\textfloatsep}{2pt}

\def\reals{\mathbb{R}}

\def\E#1{\mathbb{E}\left[#1\right]}

\def\th{\bm\theta}
\def\g{\gamma}

\DeclareMathOperator*{\argmax}{\arg\!\max}

\newtheorem{theorem}{Theorem}

\frenchspacing
\setlength{\pdfpagewidth}{8.5in}
\setlength{\pdfpageheight}{11in}
\pdfinfo{
/Title (Deep Reinforcement Learning with Double Q-learning)
/Author (Hado van Hasselt, Arthur Guez, David Silver)
}
\setcounter{secnumdepth}{0}  

\begin{document}
\title{Deep Reinforcement Learning with Double Q-learning}

\author{Hado van Hasselt \and
Arthur Guez \and
David Silver \\
Google DeepMind}

\hyphenation{Zaxxon Double}

\maketitle
\begin{abstract}
The popular Q-learning algorithm is known to overestimate action values under certain conditions. It was not previously known whether, in practice, such overestimations are common, whether they harm performance, and whether they can generally be prevented.  In this paper, we answer all these questions affirmatively.  In particular, we first show that the recent DQN algorithm, which combines Q-learning with a deep neural network, suffers from substantial overestimations in some games in the Atari 2600 domain.  We then show that the idea behind the Double Q-learning algorithm, which was introduced in a tabular setting, can be generalized to work with large-scale function approximation.  We propose a specific adaptation to the DQN algorithm and show that the resulting algorithm not only reduces the observed overestimations, as hypothesized, but that this also leads to much better performance on several games.
\end{abstract}

\noindent The goal of reinforcement learning \citep{SuttonBarto:1998} is to learn good policies for sequential decision problems, by optimizing a cumulative future reward signal.
Q-learning \citep{Watkins:1989} is one of the most popular reinforcement learning algorithms, but it is known to sometimes learn unrealistically high action values because it includes a maximization step over estimated action values, which tends to prefer overestimated to underestimated values.

In previous work, overestimations have been attributed to insufficiently flexible function approximation \citep{Thrun:1993} and noise \citep{vanHasselt:2010,vanHasselt:2011}.  In this paper, we unify these views and show overestimations can occur when the action values are inaccurate, irrespective of the source of approximation error.  Of course, imprecise value estimates are the norm during learning, which indicates that overestimations may be much more common than previously appreciated.

It is an open question whether, if the overestimations do occur, this negatively affects performance in practice. Overoptimistic value estimates are not necessarily a problem in and of themselves.  If all values would be uniformly higher then the relative action preferences are preserved and we would not expect the resulting policy to be any worse.  Furthermore, it is known that sometimes it is good to be optimistic: optimism in the face of uncertainty is a well-known exploration technique \citep{Kaelbling:1996}.  If, however, the overestimations are not uniform and not concentrated at states about which we wish to learn more, then they might negatively affect the quality of the resulting policy.  \citet{Thrun:1993} give specific examples in which this leads to suboptimal policies, even asymptotically.

To test whether overestimations occur in practice and at scale, we investigate the performance of the recent DQN algorithm \citep{Mnih:2015}.  DQN combines Q-learning with a flexible deep neural network and was tested on a varied and large set of deterministic Atari 2600 games, reaching human-level performance on many games.  In some ways, this setting is a best-case scenario for Q-learning, because the deep neural network provides flexible function approximation with the potential for a low asymptotic approximation error, and the determinism of the environments prevents the harmful effects of noise.
Perhaps surprisingly, we show that even in this comparatively favorable setting DQN sometimes substantially overestimates the values of the actions.

We show that the idea behind the Double Q-learning algorithm \citep{vanHasselt:2010}, which was first proposed in a tabular setting, can be generalized to work with arbitrary function approximation, including deep neural networks.  We use this to construct a new algorithm we call Double DQN.  We then show that this algorithm not only yields more accurate value estimates, but leads to much higher scores on several games.  This demonstrates that the overestimations of DQN were indeed leading to poorer policies and that it is beneficial to reduce them.  In addition, by improving upon DQN we obtain state-of-the-art results on the Atari domain.

\section{Background}
\label{sec:qoveropt}

To solve sequential decision problems we can learn estimates for the optimal value of each action, defined as the expected sum of future rewards
when taking that action and following the optimal policy thereafter.  Under a given policy $\pi$, the true value of an action $a$ in a state $s$ is
\[
Q_{\pi}(s,a) \equiv \E{ R_1 + \g R_2 + \ldots \mid S_0 = s, A_0 = a, \pi } \,,
\]
where $\g \in [0,1]$ is a discount factor that trades off the importance of immediate and later rewards.
The optimal value is then $Q_*(s,a) = \max_{\pi} Q_{\pi}(s,a)$. An optimal policy is easily derived from the optimal values by selecting the highest-valued action in each state.

Estimates for the optimal action values can be learned using Q-learning \citep{Watkins:1989}, a form of temporal difference learning \citep{Sutton:1988}.
Most interesting problems are too large to learn all action values in all states separately.  Instead, we can learn a parameterized value function $Q(s,a;\th_t)$.
The standard Q-learning update for the parameters after taking action $A_t$ in state $S_t$ and observing the immediate reward $R_{t+1}$ and resulting state $S_{t+1}$ is then
\begin{equation}\label{Q}
\th_{t+1} = \th_t + \alpha (Y^{\text{Q}}_t - Q(S_t,A_t;\th_t)) \nabla_{\th_t} Q(S_t,A_t;\th_t) \,.
\end{equation}
where $\alpha$ is a scalar step size and the target $Y^{\text{Q}}_t$ is defined as
\begin{equation}\label{TDQ}
Y^{\text{Q}}_t \equiv R_{t+1} + \gamma \max_a Q(S_{t+1}, a; \th_t) \,.
\end{equation}
This update resembles stochastic gradient descent, updating the current value $Q(S_t,A_t;\th_t)$ towards a target value $Y^{\text{Q}}_t$.

\subsection{Deep Q Networks} A deep Q network (DQN) is a multi-layered neural network that for a given state $s$ outputs a vector of action values $Q(s,\cdot\,;\th)$, where $\th$ are the parameters of the network.  For an $n$-dimensional state space and an action space containing $m$ actions, the neural network is a function from $\reals^n$ to $\reals^m$.
Two important ingredients of the DQN algorithm as proposed by \citet{Mnih:2015} are the use of a target network, and the use of experience replay.
The target network, with parameters $\th^-$, is the same as the online network except that its parameters are copied every $\tau$ steps from the online network, so that then $\th^-_t = \th_t$, and kept fixed on all other steps.  The target used by DQN is then
\begin{equation}\label{DQN}
Y^{\text{DQN}}_t \equiv R_{t+1} + \gamma \max_a Q(S_{t+1}, a; \th^-_t) \,.
\end{equation}
For the experience replay \citep{Lin:1992}, observed transitions are stored for some time and sampled uniformly from this memory bank to update the network. Both the target network and the experience replay dramatically improve the performance of the algorithm \citep{Mnih:2015}.

\subsection{Double Q-learning} The max operator in standard Q-learning and DQN, in \eqref{TDQ} and \eqref{DQN}, uses the same values both to select and to evaluate an action.  This makes it more likely to select overestimated values, resulting in overoptimistic value estimates.
To prevent this, we can decouple the selection from the evaluation.  This is the idea behind Double Q-learning \citep{vanHasselt:2010}.

In the original Double Q-learning algorithm, two value functions are learned by assigning each experience randomly to update one of the two value functions, such that there are two sets of weights, $\th$ and $\th'$.  For each update, one set of weights is used to determine the greedy policy and the other to determine its value.
For a clear comparison, we can first untangle the selection and evaluation in Q-learning and rewrite its target \eqref{TDQ} as
\[
Y^{\text{Q}}_t = R_{t+1} + \g Q(S_{t+1}, \argmax_a Q(S_{t+1}, a; \th_t); \th_t ) \,.
\]
The Double Q-learning error can then be written as
\begin{equation}\label{TDDQ}
Y^{\text{DoubleQ}}_t \!\equiv R_{t+1} + \g Q(S_{t+1}, \argmax_a Q(S_{t+1}, a; \th_t); \th'_t ) \,.
\end{equation}
Notice that the selection of the action, in the $\argmax$, is still due to the online weights $\th_t$.  This means that, as in Q-learning, we are still estimating the value of the greedy policy according to the current values, as defined by $\th_t$.  However, we use the second set of weights $\th_t'$ to fairly evaluate the value of this policy.  This second set of weights can be updated symmetrically by switching the roles of $\th$ and $\th'$.

\section{Overoptimism due to estimation errors}
\label{sec:learningoverest}

Q-learning's overestimations were first investigated by \citet{Thrun:1993}, who showed that if the action values contain random errors uniformly distributed in an interval $[-\epsilon,\epsilon]$ then each target is overestimated up to $\g \epsilon \frac{m-1}{m+1}$, where $m$ is the number of actions.  In addition, \citeauthor{Thrun:1993} give a concrete example in which these overestimations even asymptotically lead to sub-optimal policies, and show the overestimations manifest themselves in a small toy problem when using function approximation.
Later \citet{vanHasselt:2010} argued that noise in the environment can lead to overestimations even when using tabular representation, and proposed Double Q-learning as a solution.

In this section we demonstrate more generally that estimation errors of any kind can induce an upward bias, regardless of whether these errors are due to environmental noise, function approximation, non-stationarity, or any other source.  This is important, because in practice any method will incur some inaccuracies during learning, simply due to the fact that the true values are initially unknown.

The result by \citet{Thrun:1993} cited above gives an upper bound to the overestimation for a specific setup, but it is also possible, and potentially more interesting, to derive a lower bound.
\begin{theorem}\label{thm:lower_bound}
Consider a state $s$ in which all the true optimal action values are equal at $Q_*(s,a) = V_*(s)$ for some $V_*(s)$. Let $Q_t$ be arbitrary value estimates that are on the whole unbiased in the sense that $\sum_a ( Q_t(s,a) - V_*(s) ) = 0$, but that are not all correct, such that $\frac{1}{m} \sum_a ( Q_t(s,a) - V_*(s) )^2 = C$ for some $C > 0$, where $m \ge 2$ is the number of actions in $s$. Under these conditions, $\max_a Q_t(s,a) \ge V_*(s) + \sqrt{\frac{C}{m-1}}$.  This lower bound is tight. Under the same conditions, the lower bound on the absolute error of the Double Q-learning estimate is zero. (Proof in appendix.)
\end{theorem}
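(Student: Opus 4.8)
The plan is to work with the centered errors $\epsilon_a \equiv Q_t(s,a) - V_*(s)$. In these terms the hypotheses read $\sum_a \epsilon_a = 0$ and $\sum_a \epsilon_a^2 = mC$, and the goal is to show $M \equiv \max_a \epsilon_a \ge \sqrt{C/(m-1)}$; note that since $C>0$ the $\epsilon_a$ are not all zero, and because they sum to zero at least one is strictly positive, so $M>0$.

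First I would observe that the reflexive move --- applying Cauchy--Schwarz to the $m-1$ errors other than the maximizing one --- goes the \emph{wrong} way: it lower-bounds their sum of squares and hence produces an \emph{upper} bound on $M$ (namely $M\le\sqrt{(m-1)C}$). The fix is to split the errors by sign. Set $S \equiv \sum_{a:\epsilon_a>0}\epsilon_a$; since $\sum_a\epsilon_a=0$ we also have $\sum_{a:\epsilon_a<0}|\epsilon_a| = S$. Bound the two halves of the sum of squares separately: $\sum_{\epsilon_a>0}\epsilon_a^2 \le M\sum_{\epsilon_a>0}\epsilon_a = MS$ because every positive error is at most $M$, and $\sum_{\epsilon_a<0}\epsilon_a^2 \le \big(\sum_{\epsilon_a<0}|\epsilon_a|\big)^2 = S^2$ because for nonnegative numbers the sum of squares is at most the square of the sum. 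Hence $mC = \sum_a \epsilon_a^2 \le MS + S^2$.

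The second ingredient is that at most $m-1$ of the errors can be positive (at least one must be negative), so $S \le (m-1)M$. Substituting this into $mC \le MS+S^2$ and using that $s\mapsto Ms+s^2$ is increasing for $s\ge 0$ gives $mC \le (m-1)M^2 + (m-1)^2M^2 = m(m-1)M^2$, i.e.\ $M \ge \sqrt{C/(m-1)}$, which is the asserted bound. For tightness I would exhibit the estimates with $\epsilon_a = \sqrt{C/(m-1)}$ for $m-1$ of the actions and $\epsilon_a = -\sqrt{(m-1)C}$ for the remaining one; a direct check shows these satisfy both constraints and attain $\max_a \epsilon_a = \sqrt{C/(m-1)}$.

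For the Double Q-learning claim, the key is that its target decouples selection from evaluation: the action $a^* = \argmax_a Q_t(s,a)$ is chosen using $Q_t$, but its value is read off a second estimate $Q'_t$ that is not constrained by the hypotheses. So it suffices to exhibit one admissible configuration in which $Q'_t(s,a^*) = V_*(s)$ --- for instance take $Q'_t(s,\cdot) \equiv V_*(s)$ --- which makes the Double Q-learning estimate exactly $V_*(s)$ and its absolute error $0$; since absolute error is nonnegative, $0$ is the (attained) lower bound. The main obstacle in the whole argument is the inequality-direction issue in the second paragraph: one has to avoid the naive Cauchy--Schwarz estimate and instead separate the errors by sign and invoke the ``at most $m-1$ positive errors'' fact; everything after that is bookkeeping.
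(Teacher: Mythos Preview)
Your proof is correct and follows essentially the same route as the paper's: both introduce the centered errors $\epsilon_a$, split them by sign, exploit that at most $m-1$ of them can be positive, and bound the two halves of $\sum_a \epsilon_a^2$ separately to force $M^2 \ge C/(m-1)$; the paper packages this as a proof by contradiction (assume $M<\sqrt{C/(m-1)}$ and derive $\sum_a\epsilon_a^2 < mC$) whereas you argue directly, but the underlying inequalities are the same. Your tightness example and your Double Q-learning argument also match the paper's.
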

Note that we did not need to assume that estimation errors for different actions are independent.
This theorem shows that even if the value estimates are on average correct, estimation errors of any source can drive the estimates up and away from the true optimal values.

The lower bound in Theorem \ref{thm:lower_bound} decreases with the number of actions. This is an artifact of considering the lower bound, which requires very specific values to be attained. More typically, the overoptimism increases with the number of actions as shown in Figure \ref{Gauss_bars}.
\begin{figure}[t]
\begin{center}
\includegraphics[width=3.3in]{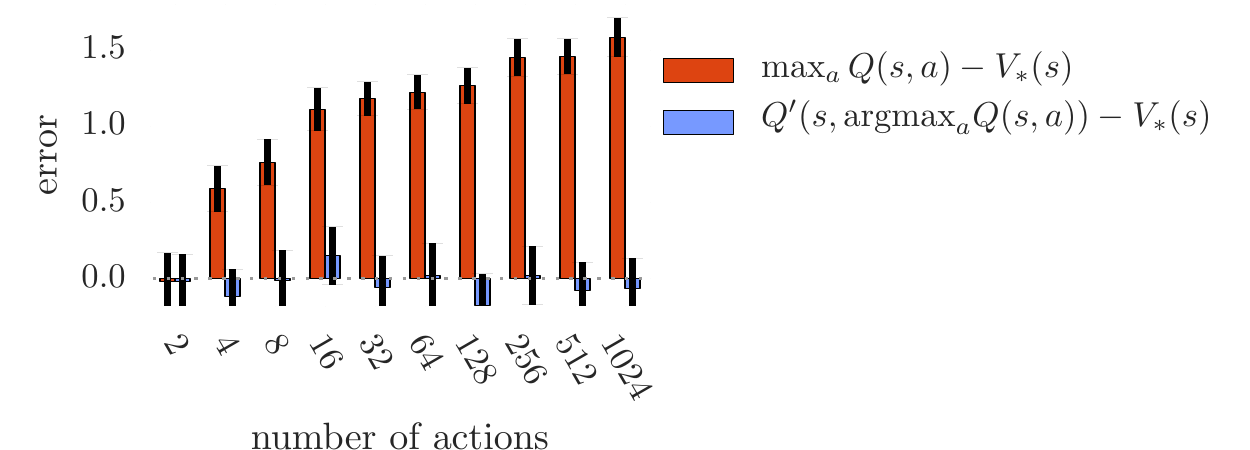}
\caption{\label{Gauss_bars} The orange bars show the bias in a single Q-learning update when the action values are $Q(s,a) = V_*(s) + \epsilon_a$ and the errors $\{\epsilon_a\}_{a=1}^m$ are independent standard normal random variables.  The second set of action values $Q'$, used for the blue bars, was generated identically and independently.  All bars are the average of 100 repetitions.  }
\end{center}
\end{figure}
Q-learning's overestimations there indeed increase with the number of actions, while Double Q-learning is unbiased.
As another example, if for all actions $Q_*(s,a) = V_*(s)$ and the estimation errors $Q_t(s,a) - V_*(s)$ are uniformly random in $[-1,1]$, then the overoptimism is $\frac{m-1}{m+1}$. (Proof in appendix.)
\begin{figure*}[t]
\centering
\includegraphics[width=6.8in]{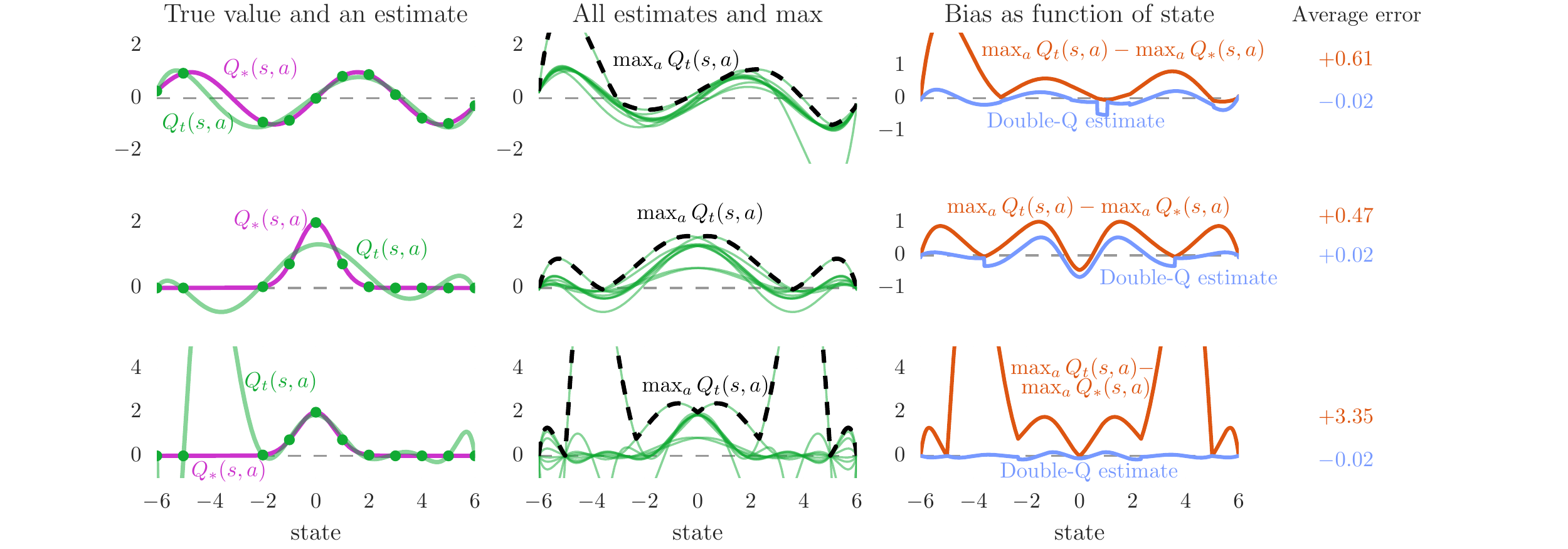}
\caption{\label{function_overest}
\small{
Illustration of overestimations during learning. In each state (x-axis), there are 10 actions. The \textbf{left column} shows the true values $V_*(s)$ (purple line). All true action values are defined by $Q_*(s,a) = V_*(s)$. The green line shows estimated values $Q(s,a)$ for one action as a function of state, fitted to the true value at several sampled states (green dots).
The \textbf{middle column} plots show all the estimated values (green), and the maximum of these values (dashed black). The maximum is higher than the true value (purple, left plot) almost everywhere. The \textbf{right column} plots shows the difference in orange. The blue line in the right plots is the estimate used by Double Q-learning with a second set of samples for each state. The blue line is much closer to zero, indicating less bias. The three \textbf{rows} correspond to different true functions (left, purple) or capacities of the fitted function (left, green). (Details in the text) 
}}
\end{figure*}

We now turn to function approximation and consider a real-valued continuous state space with 10 discrete actions in each state. For simplicity, the true optimal action values in this example depend only on state so that in each state all actions have the same true value. These true values are shown in the left column of plots in Figure \ref{function_overest} (purple lines) and are defined as either $Q_*(s,a) = \sin(s)$ (top row) or $Q_*(s,a) = 2 \exp(-s^2)$ (middle and bottom rows). The left plots also show an approximation for a single action (green lines) as a function of state as well as the samples the estimate is based on (green dots).  The estimate is a $d$-degree polynomial that is fit to the true values at sampled states, where $d=6$ (top and middle rows) or $d=9$ (bottom row). The samples match the true function exactly: there is no noise and we assume we have ground truth for the action value on these sampled states.  The approximation is inexact even on the sampled states for the top two rows because the function approximation is insufficiently flexible. In the bottom row, the function is flexible enough to fit the green dots, but this reduces the accuracy in unsampled states.   Notice that the sampled states are spaced further apart near the left side of the left plots, resulting in larger estimation errors.  In many ways this is a typical learning setting, where at each point in time we only have limited data.

The middle column of plots in Figure \ref{function_overest} shows estimated action value functions for all 10 actions (green lines), as functions of state, along with the maximum action value in each state (black dashed line).  Although the true value function is the same for all actions, the approximations differ because we have supplied different sets of sampled states.\footnote{Each action-value function is fit with a different subset of integer states. States $-6$ and $6$ are always included to avoid extrapolations, and for each action two adjacent integers are missing: for action $a_1$ states $-5$ and $-4$ are not sampled, for $a_2$ states $-4$ and $-3$ are not sampled, and so on. This causes the estimated values to differ.} The maximum is often higher than the ground truth shown in purple on the left.  This is confirmed in the right plots, which shows the difference between the black and purple curves in orange.  The orange line is almost always positive, indicating an upward bias. The right plots also show the estimates from Double Q-learning in blue\footnote{We arbitrarily used the samples of action $a_{i+5}$ (for $i\le 5$) or $a_{i-5}$ (for $i > 5$) as the second set of samples for the double estimator of action $a_i$.}, which are on average much closer to zero.  This demonstrates that Double Q-learning indeed can successfully reduce the overoptimism of Q-learning.

The different rows in Figure \ref{function_overest} show variations of the same experiment.  The difference between the top and middle rows is the true value function, demonstrating that overestimations are not an artifact of a specific true value function.  The difference between the middle and bottom rows is the flexibility of the function approximation.  In the left-middle plot, the estimates are even incorrect for some of the sampled states because the function is insufficiently flexible.  The function in the bottom-left plot is more flexible but this causes higher estimation errors for unseen states, resulting in higher overestimations.  This is important because flexible parametric function approximators are often employed in reinforcement learning (see, e.g., \citealt{Tesauro:1995,Sallans:2004,Riedmiller:2005,Mnih:2015}). 

In contrast to \citet{vanHasselt:2010} we did not use a statistical argument to find overestimations, the process to obtain Figure \ref{function_overest} is fully deterministic. In contrast to \citet{Thrun:1993}, we did not rely on inflexible function approximation with irreducible asymptotic errors; the bottom row shows that a function that is flexible enough to cover all samples leads to high overestimations. This indicates that the overestimations can occur quite generally.

In the examples above, overestimations occur even when assuming we have samples of the \textit{true} action value at certain states.
The value estimates can further deteriorate if we bootstrap off of action values that are already overoptimistic, since this causes overestimations to propagate throughout our estimates.  
Although \textit{uniformly} overestimating values might not hurt the resulting policy, in practice overestimation errors will differ for different states and actions. Overestimation combined with bootstrapping then has the pernicious effect of propagating the wrong relative information about which states are more valuable than others, directly affecting the quality of the learned policies. 

The overestimations should not be confused with optimism in the face of uncertainty \citep{Sutton:1990,Agrawal:1995,Kaelbling:1996,Auer:2002,Brafman:2003,Szita:2008,Strehl:2009}, where an exploration bonus is given to states or actions with uncertain values.  Conversely, the overestimations discussed here occur only after updating, resulting in overoptimism in the face of apparent certainty.  This was already observed by \citet{Thrun:1993}, who noted that, in contrast to optimism in the face of uncertainty, these overestimations actually can impede learning an optimal policy.  We will see this negative effect on policy quality confirmed later in the experiments as well: when we reduce the overestimations using Double Q-learning, the policies improve.

\section{Double DQN}

The idea of Double Q-learning is to reduce overestimations by decomposing the max operation 
in the target into action selection and action evaluation.
Although not fully decoupled, the target network in the DQN architecture provides a natural candidate for the second value function, without having to introduce additional networks. 
We therefore propose to evaluate the greedy policy according to the online network, but using the target network to estimate its value.
In reference to both Double Q-learning and DQN, we refer to the resulting algorithm as Double DQN. 
Its update is the same as for DQN, but replacing the target $Y^{\text{DQN}}_t$ with
\[
Y^\text{DoubleDQN}_t \equiv R_{t+1} + \g Q( S_{t+1}, \argmax_a Q( S_{t+1}, a ; \th_t ), \th^-_t ) \,.
\]
In comparison to Double Q-learning \eqref{TDDQ}, the weights of the second network $\th'_t$ are replaced with the weights of the target network $\th^-_t$ for the evaluation of the current greedy policy.  The update to the target network stays unchanged from DQN, and remains a periodic copy of the online network.

This version of Double DQN is perhaps the minimal possible change to DQN towards Double Q-learning.  The goal is to get most of the benefit of Double Q-learning, while keeping the rest of the DQN algorithm intact for a fair comparison, and with minimal computational overhead.

\section{Empirical results}
\label{sec:results}

In this section, we analyze the overestimations of DQN and show that Double DQN improves over DQN both in terms of value accuracy and in terms of policy quality. To further test the robustness of the approach we additionally evaluate  the algorithms with random starts generated from expert human trajectories, as proposed by \citet{Nair:2015}.

Our testbed consists of Atari 2600 games, using the Arcade Learning Environment~\citep{Bellemare:2013}. The goal is for a single algorithm, with a fixed set of hyperparameters, to learn to play each of the games separately from interaction given only the screen pixels as input. 
This is a demanding testbed: not only are the inputs high-dimensional, the game visuals and game mechanics vary substantially between games. Good solutions must therefore rely heavily on the learning algorithm --- it is not practically feasible to overfit the domain by relying only on tuning.

We closely follow the experimental setting and network architecture outlined by \citet{Mnih:2015}. Briefly, the network architecture is a convolutional neural network \citep{Fukushima:1988,Lecun:1998} with 3 convolution layers and a fully-connected hidden layer (approximately 1.5M parameters in total). The network takes the last four frames as input and outputs the action value of each action. On each game, the network is trained on a single GPU for 200M frames, or approximately 1 week.

\subsection{Results on overoptimism}
\label{sec:result:overopt}
\begin{figure*}[t]
\begin{center}
\includegraphics[width=0.7\textwidth]{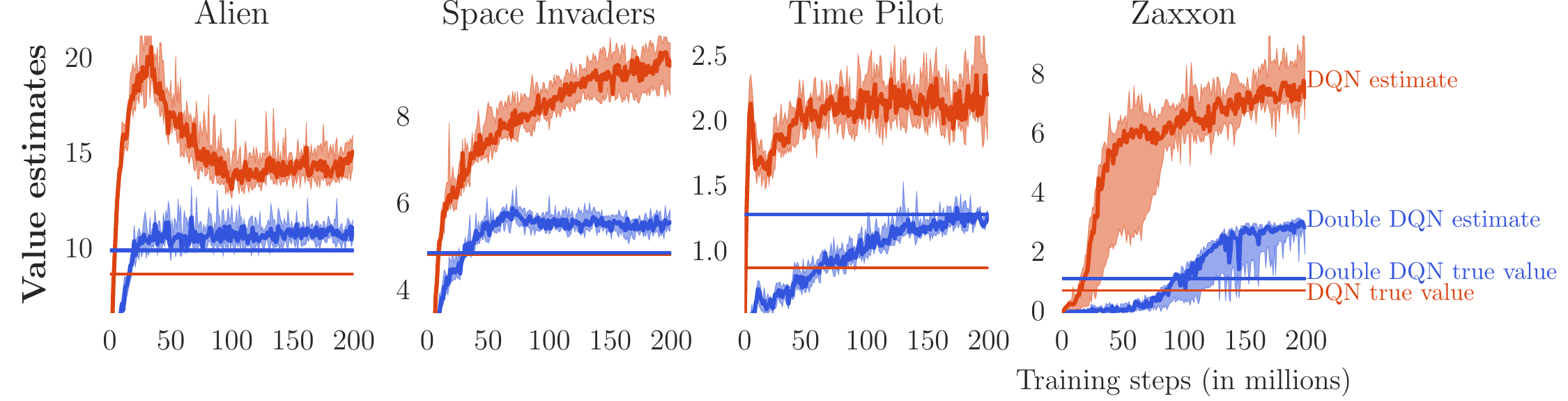}
\includegraphics[width=0.7\textwidth]{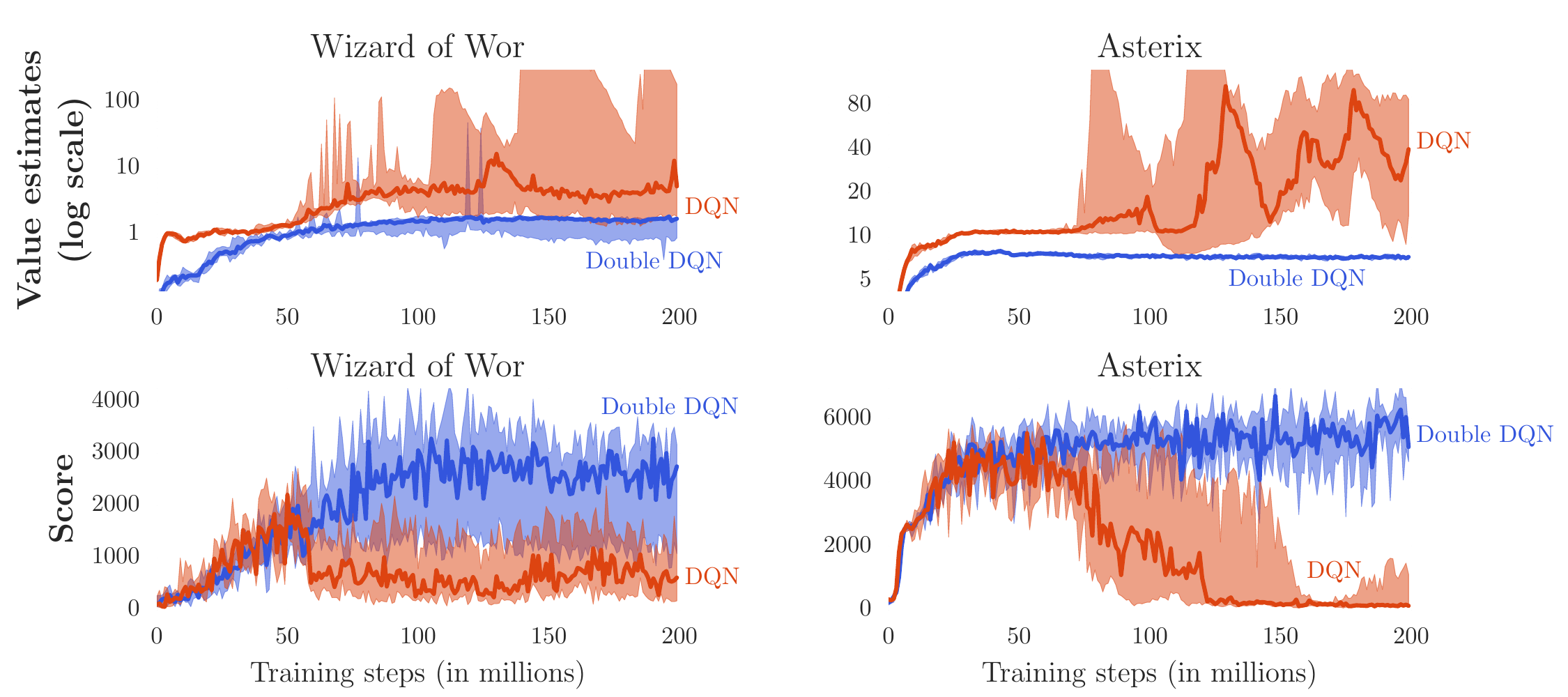}
\caption{\label{fig:dqn_overest} \small{The \textbf{top} and \textbf{middle} rows show value estimates by DQN (orange) and Double DQN (blue) on six Atari games. The results are obtained by running DQN and Double DQN with 6 different random seeds with the hyper-parameters employed by \citet{Mnih:2015}. The darker line shows the median over seeds and we average the two extreme values to obtain the shaded area (i.e., 10\% and 90\% quantiles with linear interpolation).
The straight horizontal orange (for DQN) and blue (for Double DQN) lines in the top row are computed by running the corresponding agents after learning concluded, and averaging the actual discounted return obtained from each visited state.  These straight lines would match the learning curves at the right side of the plots if there is no bias.
The \textbf{middle} row shows the value estimates (in log scale) for two games in which DQN's overoptimism is quite extreme.
The \textbf{bottom} row shows the detrimental effect of this on the score achieved by the agent as it is evaluated during training: the scores drop when the overestimations begin. Learning with Double DQN is much more stable.}}
\end{center}
\end{figure*}

Figure~\ref{fig:dqn_overest} shows examples of DQN's overestimations in six Atari games. DQN and Double DQN were both trained under the exact conditions described by \citet{Mnih:2015}.  DQN is consistently and sometimes vastly overoptimistic about the value of the current greedy policy, as can be seen by comparing the orange learning curves in the top row of plots to the straight orange lines, which represent the actual discounted value of the best learned policy.  
More precisely, the (averaged) value estimates are computed regularly during training with full evaluation phases of length $T=125,000$ steps as
\[
\frac{1}{T} \sum_{t=1}^T \argmax_a Q( S_{t}, a ; \th) \,.
\]The ground truth averaged values are obtained by running the best learned policies for several episodes and computing the actual cumulative rewards.      
Without overestimations we would expect these quantities to match up (i.e., the curve to match the straight line at the right of each plot).  Instead, the learning curves of DQN consistently end up much higher than the true values.  The learning curves for Double DQN, shown in blue, are much closer to the blue straight line representing the true value of the final policy.  Note that the blue straight line is often higher than the orange straight line.  This indicates that Double DQN does not just produce more accurate value estimates but also better policies.

More extreme overestimations are shown in the middle two plots, where DQN is highly unstable on the games Asterix and Wizard of Wor.  Notice the log scale for the values on the $y$-axis.  The bottom two plots shows the corresponding scores for these two games.  Notice that the increases in value estimates for DQN in the middle plots coincide with decreasing scores in bottom plots.  Again, this indicates that the overestimations are harming the quality of the resulting policies.
If seen in isolation, one might perhaps be tempted to think the observed instability is related to inherent instability problems of off-policy learning with function approximation \citep{Baird:1995,Tsitsiklis:1997,Sutton:2008,Maei:2011,Sutton:2015}.  However, we see that learning is much more stable with Double DQN, suggesting that the cause for these instabilities is in fact Q-learning's overoptimism.
Figure \ref{fig:dqn_overest} only shows a few examples, but overestimations were observed for DQN in all 49 tested Atari games, albeit in varying amounts.

\subsection{Quality of the learned policies}
\label{sec:result:quality}
\begin{table}[t]
\centering
 \def\arraystretch{1.1}
 \begin{tabular}{|l|rr|}
 \hline
  		        &\small DQN & \small Double DQN 		\\ \hline 
 \small Median  &\small ~93.5\% 	& \small 114.7\%~~       \\ \hline 
 \small Mean    &\small 241.1\% 	& \small 330.3\%~~ 	    \\ \hline 
  \end{tabular}
\caption{\small Summary of normalized performance up to 5 minutes of play on 49 games.  Results for DQN are from \citet{Mnih:2015}}
\label{tableresults}
\end{table}

Overoptimism does not always adversely affect the quality of the learned policy.
For example, DQN achieves optimal behavior in Pong despite slightly overestimating the policy value.
Nevertheless, reducing overestimations can significantly benefit the stability of learning; we see clear examples of this in Figure~\ref{fig:dqn_overest}. We now assess more generally how much Double DQN helps in terms of policy quality by evaluating on all 49 games that DQN was tested on.

As described by \citet{Mnih:2015} each evaluation episode starts by executing a special
no-op action that does not affect the environment up to 30 times, to provide different starting points for the agent.
Some exploration
during evaluation provides additional randomization.
For Double DQN we used the exact same hyper-parameters as for DQN, to allow for a controlled experiment focused just on reducing overestimations.  The learned policies are evaluated for 5 mins of emulator time (18,000 frames)
with an $\epsilon$-greedy policy where $\epsilon= 0.05$. The scores are averaged over 100 episodes.
The only difference between Double DQN and DQN is the target, using $Y^{\text{DoubleDQN}}_t$ rather than $Y^{\text{DQN}}$.
This evaluation is somewhat adversarial, as the used hyper-parameters were tuned for DQN but not for Double DQN.

To obtain summary statistics across games, we normalize the score for each game as follows:
\begin{equation}
  \text{score}_{\text{normalized}} = \frac{\text{score}_\text{agent} - \text{score}_\text{random}}{\text{score}_\text{human} - \text{score}_\text{random}}.
\end{equation}
The `random' and `human' scores are the same as used by \citet{Mnih:2015}, and are given in the appendix.

Table~\ref{tableresults}, under \textbf{no ops}, shows that on the whole Double DQN clearly improves over DQN.
A detailed comparison (in appendix) shows that there are several games in which Double DQN greatly improves upon DQN.  Noteworthy examples include Road Runner (from 233\% to 617\%), Asterix (from 70\% to 180\%), Zaxxon (from 54\% to 111\%), and Double Dunk (from 17\% to 397\%).

The Gorila algorithm \citep{Nair:2015}, which is a massively distributed version of DQN, is not included in the table because the architecture and infrastructure is sufficiently different to make a direct comparison unclear.  For completeness, we note that Gorila obtained median and mean normalized scores of 96\% and 495\%, respectively.

\subsection{Robustness to Human starts}
\label{sec:result:human_starts}
\begin{table}[t]
\centering
 \def\arraystretch{1.1}
 \begin{tabular}{|l|rrr|}
  \hline
  		        &\small DQN  & \small Double DQN  	& \small Double DQN (tuned) \\ \hline 
 \small Median  &\small ~47.5\% 	& \small ~88.4\%~~      & \small 116.7\%~~       \\ \hline 
 \small Mean    &\small 122.0\% 	& \small 273.1\%~~ 	    & \small 475.2\%~~ 	    \\ \hline 
 \end{tabular}
\caption{\small Summary of normalized performance up to 30 minutes of play on 49 games with human starts. Results for DQN are from \citet{Nair:2015}.}
\label{tableresults_at30_rnd_starts}
\end{table}

One concern with the previous evaluation is that in deterministic games with a
unique starting point the learner could potentially learn to remember sequences of actions without
much need to generalize. While successful, the solution would
not be particularly robust.  By testing the agents from various starting points,
we can test whether the found solutions generalize well, and as such provide a
challenging testbed for the learned polices~\citep{Nair:2015}.

We obtained 100 starting points sampled for each game from a human
expert's trajectory, as proposed by \citet{Nair:2015}. We start an
evaluation episode from each of these starting points and run the emulator for
up to 108,000 frames (30 mins at 60Hz including the trajectory before the
starting point). Each agent is only evaluated on the rewards accumulated after
the starting point.
\begin{figure}[t]
\begin{center}
\includegraphics[width=3.1in]{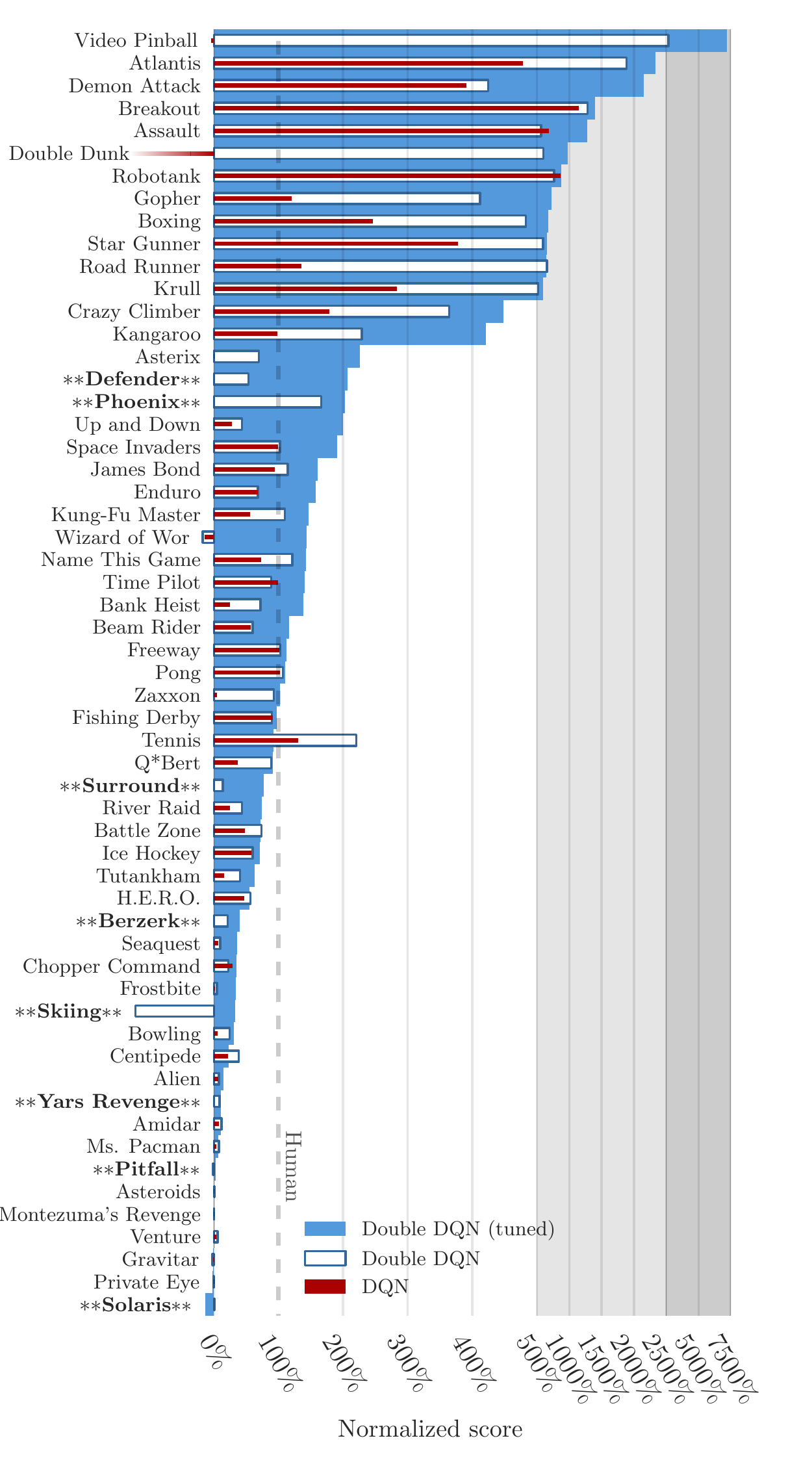}
\caption{ \label{at30_rnd_fig} {Normalized scores on 57 Atari games, tested for 100 episodes per game with human starts.  Compared to \citet{Mnih:2015}, eight games additional games were tested.  These are indicated with stars and a bold font.}}
\end{center}
\end{figure}

For this evaluation we include a tuned version of Double DQN.  Some tuning is appropriate because the hyperparameters were tuned for DQN, which is a different algorithm.  For the tuned version of Double DQN, we increased the number of frames between each two copies of the target network from 10,000 to 30,000, to reduce overestimations further because immediately after each switch DQN and Double DQN both revert to Q-learning.  In addition, we reduced the exploration during learning from $\epsilon=0.1$ to $\epsilon=0.01$, and then used $\epsilon=0.001$ during evaluation.  Finally, the tuned version uses a single shared bias for all action values in the top layer of the network.  Each of these changes improved performance and together they result in clearly better results.\footnote{Except for Tennis, where the lower $\epsilon$ during training seemed to hurt rather than help.}

Table~\ref{tableresults_at30_rnd_starts} reports summary statistics for this evaluation on the 49 games from \citet{Mnih:2015}.  Double DQN obtains clearly higher median and mean scores.  Again Gorila DQN \citep{Nair:2015} is not included in the table, but for completeness note it obtained a median of 78\% and a mean of 259\%. 
Detailed results, plus results for an additional 8 games, are available in Figure~\ref{at30_rnd_fig} and in the appendix. On several games the improvements from DQN to Double DQN are striking, in some cases bringing scores much closer to human, or even surpassing these.

Double DQN appears more robust to this more challenging evaluation,
suggesting that appropriate generalizations occur and that the found solutions do not exploit the determinism of the environments.
This is appealing, as it indicates progress towards finding general solutions rather than a deterministic sequence of steps that would be less robust.

\section{Discussion}

This paper has five contributions.  First, we have shown why Q-learning can be overoptimistic in large-scale problems, even if these are deterministic, due to the inherent estimation errors of learning.  Second, by analyzing the value estimates on Atari games we have shown that these overestimations are more common and severe in practice than previously acknowledged.
Third, we have shown that Double Q-learning can be used at scale to successfully reduce this overoptimism, resulting in more stable and reliable learning.  Fourth, we have proposed a specific implementation called Double DQN, that uses the existing architecture and deep neural network of the DQN algorithm without requiring additional networks or parameters.  Finally, we have shown that Double DQN finds better policies, obtaining new state-of-the-art results on the Atari 2600 domain.

\section*{Acknowledgments}
We would like to thank Tom Schaul, Volodymyr Mnih, Marc Bellemare, Thomas Degris, Georg Ostrovski, and Richard Sutton for helpful comments, and everyone at Google DeepMind for a constructive research environment.
\small
\bibliography{aaai}
\bibliographystyle{abbrvnat}
\newpage
\section*{Appendix}
\setcounter{theorem}{0}
\begin{theorem}
Consider a state $s$ in which all the true optimal action values are equal at $Q_*(s,a) = V_*(s)$ for some $V_*(s)$. Let $Q_t$ be arbitrary value estimates that are on the whole unbiased in the sense that $\sum_a ( Q_t(s,a) - V_*(s) ) = 0$, but that are not all zero, such that $\frac{1}{m} \sum_a ( Q_t(s,a) - V_*(s) )^2 = C$ for some $C > 0$, where $m \ge 2$ is the number of actions in $s$. Under these conditions, $\max_a Q_t(s,a) \ge V_*(s) + \sqrt{\frac{C}{m-1}}$.  This lower bound is tight. Under the same conditions, the lower bound on the absolute error of the Double Q-learning estimate is zero.
\end{theorem}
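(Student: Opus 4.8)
The plan is to pass to the centered errors $\epsilon_a \equiv Q_t(s,a) - V_*(s)$, so the hypotheses become $\sum_a \epsilon_a = 0$ and $\sum_a \epsilon_a^2 = mC > 0$, and the claim becomes $\epsilon^+ \equiv \max_a \epsilon_a \ge \sqrt{C/(m-1)}$. First I would note $\epsilon^+ > 0$: if every $\epsilon_a \le 0$, then $\sum_a \epsilon_a = 0$ forces all $\epsilon_a = 0$, contradicting $C > 0$. Then split the actions into $P = \{a : \epsilon_a > 0\}$ and $N = \{a : \epsilon_a < 0\}$ (actions with $\epsilon_a = 0$ contribute to neither the sum nor the sum of squares), both nonempty, and set $T \equiv \sum_{a\in P}\epsilon_a = \sum_{a\in N}|\epsilon_a| > 0$, the two being equal because $\sum_a \epsilon_a = 0$.

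The core is two elementary bounds on $mC = \sum_a \epsilon_a^2 = \sum_{a\in P}\epsilon_a^2 + \sum_{a\in N}\epsilon_a^2$. For the positive part, $\epsilon_a \le \epsilon^+$ gives $\sum_{a\in P}\epsilon_a^2 \le \epsilon^+ \sum_{a\in P}\epsilon_a = \epsilon^+ T$. For the negative part, since the $|\epsilon_a|$ are nonnegative and sum to $T$, $\sum_{a\in N}\epsilon_a^2 \le \big(\sum_{a\in N}|\epsilon_a|\big)^2 = T^2$. Finally $|P| \le m-1$ (because $N \ne \emptyset$) yields $T \le |P|\,\epsilon^+ \le (m-1)\epsilon^+$. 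Substituting, $mC \le \epsilon^+ T + T^2 \le (m-1)(\epsilon^+)^2 + (m-1)^2(\epsilon^+)^2 = m(m-1)(\epsilon^+)^2$, i.e.\ $\epsilon^+ \ge \sqrt{C/(m-1)}$. For tightness I would exhibit the configuration in which $m-1$ of the errors equal $\sqrt{C/(m-1)}$ and the remaining one equals $-(m-1)\sqrt{C/(m-1)}$: one checks directly that $\sum_a \epsilon_a = 0$, that $\frac{1}{m}\sum_a \epsilon_a^2 = C$, and that $\max_a \epsilon_a = \sqrt{C/(m-1)}$, so the bound is attained.

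For the Double Q-learning claim I would recall that its estimate is $Q'_t(s,a^\star)$ with $a^\star = \argmax_a Q_t(s,a)$ for a second set of estimates $Q'_t$, and show the infimum of $|Q'_t(s,a^\star) - V_*(s)|$ over admissible configurations is $0$ by producing a witnessing pair: take $Q_t$ as in the tight example above, so its greedy action $a^\star$ is one of the $m-1$ actions with error $+\sqrt{C/(m-1)}$, and choose $Q'_t$ with zero error at $a^\star$ while placing its entire error budget on two of the remaining actions (feasible whenever $m \ge 3$; for $m=2$ the second estimator need only be unbiased, and $Q'_t \equiv V_*$ works). Then $Q'_t(s,a^\star) = V_*(s)$ exactly, so the absolute error vanishes.

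The main obstacle is getting the negative-part bound in the right direction: the reflex moves (Cauchy--Schwarz, power-mean) bound $\sum_{a\in N}\epsilon_a^2$ from \emph{below}, which is useless here, so one must instead recognize that concentrating all of the negative mass on a single action \emph{maximizes} its contribution, giving the $T^2$ bound, and then couple this with $T \le (m-1)\epsilon^+$ --- the only point where the count $m$ enters --- to close the estimate. Everything else is bookkeeping, and the extremal configuration is read off from the two inequalities being simultaneously tight (all positive errors equal to $\epsilon^+$, a single large negative error).
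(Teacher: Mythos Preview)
Your proof is correct and follows essentially the same route as the paper's: reduce to the centered errors $\epsilon_a$, split into positive and negative parts, bound the positive sum of squares by $\epsilon^+ T$ and the negative sum of squares by $T^2$, then use $|P|\le m-1$ to close, with the same tight configuration. The only cosmetic differences are that you argue directly while the paper argues by contradiction, and your bound $\sum_{a\in N}\epsilon_a^2 \le \big(\sum_{a\in N}|\epsilon_a|\big)^2$ is a slightly slicker packaging of what the paper obtains via H\"older plus $\max_j|\epsilon_j^-|\le T$.
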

\begin{proof}[Proof of Theorem 1]
Define the errors for each action $a$ as $\epsilon_a = Q_t(s,a) - V_*(s)$. Suppose that there exists a setting of $\{ \epsilon_a \}$ such that $\max_a \epsilon_a < \sqrt{\frac{C}{m-1}}$. Let $\{ \epsilon^+_i \}$ be the set of positive $\epsilon$ of size $n$, and $\{ \epsilon^-_j \}$ the set of strictly negative $\epsilon$ of size $m-n$ (such that $\{ \epsilon\} = \{ \epsilon^+_i \} \cup \{ \epsilon^-_j \}$). If $n=m$, then $\sum_a \epsilon_a = 0 \implies \epsilon_a = 0 \;\forall a$, which contradicts $\sum_a \epsilon_a^2 = m C$. Hence, it must be that $n \leq m-1$. Then, $\sum_{i=1}^n \epsilon^+_i \le n \max_i \epsilon^+_i < n \sqrt{\frac{C}{m-1}}$, and therefore (using the constraint $\sum_a \epsilon_a = 0$) we also have that $\sum_{j=1}^{m-n} |\epsilon^-_j| < n \sqrt{\frac{C}{m-1}}$. This implies $\max_j |\epsilon^-_j| < n \sqrt{\frac{C}{m-1}}$. 
By H\"{o}lder's inequality, then
\begin{align*}
\sum_{j=1}^{m-n} (\epsilon^-_j)^2
&\leq \sum_{j=1}^{m-n} |\epsilon^-_j| \cdot \max_j |\epsilon^-_j| \\
&< n \sqrt{\frac{C}{m-1}} n \sqrt{\frac{C}{m-1}}\,.
\end{align*}
We can now combine these relations to compute an upper-bound on the sum of squares for all $\epsilon_a$:
\begin{align*}
\sum_{a=1}^m (\epsilon_a)^2 &= \sum_{i=1}^{n} (\epsilon^+_i)^2 + \sum_{j=1}^{m-n} (\epsilon^-_j)^2 \\ 
&< n \frac{C}{m-1} +  n \sqrt{\frac{C}{m-1}} n \sqrt{\frac{C}{m-1}} \\
&= C \frac{n(n+1)}{m-1}\\
&\leq m C.
\end{align*}
This contradicts the assumption that $\sum_{a=1}^m \epsilon_a^2 < m C$, and therefore $\max_a \epsilon_a \geq \sqrt{\frac{C}{m-1}}$ for all settings of $\epsilon$ that satisfy the constraints. We can check that the lower-bound is tight by setting $\epsilon_a = \sqrt{\frac{C}{m-1}}$ for $a=1,\dots,m-1$ and $\epsilon_m = - \sqrt{(m-1)C}$. This verifies $\sum_a \epsilon_a^2 = m C$ and $\sum_a \epsilon_a = 0$.

The only tight lower bound on the absolute error for Double Q-learning $|Q'_t(s,\argmax_a Q_t(s,a) ) - V_*(s)|$ is zero.
This can be seen by because we can have
\[
Q_t(s,a_1) = V_*(s) + \sqrt{C\frac{m-1}{m}}\,,
\]
and
\[
Q_t(s,a_i) = V_*(s)-\sqrt{C\frac{1}{m(m-1)}}\,\text{, for $i > 1$.}
\]
Then the conditions of the theorem hold.  If then, furthermore, we have $Q'_t(s,a_1) = V_*(s)$ then the error is zero.  The remaining action values $Q'_t(s,a_i)$, for $i>1$, are arbitrary.
\end{proof}

\begin{theorem}
Consider a state $s$ in which all the true optimal action values are equal at $Q_*(s,a) = V_*(s)$.  Suppose that the estimation errors $Q_t(s,a) - Q_*(s,a)$ are independently distributed uniformly randomly in $[-1,1]$.  Then,
\begin{align*}
\E{ \max_a Q_t(s,a) - V_*(s) } = \frac{m-1}{m+1} \,
\end{align*}
\end{theorem}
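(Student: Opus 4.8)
The plan is to recognize this as a standard computation involving the maximum of $m$ independent and identically distributed uniform random variables, and to evaluate the resulting one-dimensional integral.

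First I would set $\epsilon_a = Q_t(s,a) - V_*(s)$, so that by hypothesis $\epsilon_1,\dots,\epsilon_m$ are i.i.d.\ uniform on $[-1,1]$ and, since $V_*(s)$ is a constant, the quantity of interest is exactly $\E{\max_a \epsilon_a}$. Write $M = \max_a \epsilon_a$. Using the independence assumption, the cumulative distribution function of $M$ factors: for $x \in [-1,1]$,
\[
P(M \le x) = \prod_{a=1}^m P(\epsilon_a \le x) = \left(\frac{x+1}{2}\right)^m ,
\]
and differentiating gives the density $f_M(x) = \frac{m}{2}\left(\frac{x+1}{2}\right)^{m-1}$ supported on $[-1,1]$.

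Next I would compute $\E{M} = \int_{-1}^1 x\, f_M(x)\,dx$ directly. The substitution $u = (x+1)/2$ (so $x = 2u-1$ and $dx = 2\,du$) turns this into $\int_0^1 (2u-1)\, m\, u^{m-1}\,du = m\bigl(\tfrac{2}{m+1} - \tfrac{1}{m}\bigr)$, which simplifies to $\tfrac{m-1}{m+1}$, completing the proof. Equivalently, one can observe that if $U_a$ is uniform on $[0,1]$ then $2U_a - 1$ is uniform on $[-1,1]$, so $\E{M} = 2\,\E{\max_a U_a} - 1 = 2\cdot\tfrac{m}{m+1} - 1 = \tfrac{m-1}{m+1}$, invoking the well-known fact that the maximum of $m$ i.i.d.\ uniforms on $[0,1]$ has mean $m/(m+1)$.

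There is no real obstacle here. The only place the hypotheses are genuinely used is the independence assumption, which is what lets the joint CDF factor into a product; everything else is a routine order-statistics integral. The one thing to be careful about is the bookkeeping in the change of variables, or, if one instead prefers the tail-integral identity $\E{M} = \int_0^1 \bigl(1 - F_M(x)\bigr)\,dx - \int_{-1}^0 F_M(x)\,dx$, splitting the integral correctly at $0$ because $M$ can take negative values.
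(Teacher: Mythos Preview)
Your proof is correct and follows essentially the same approach as the paper: define the errors $\epsilon_a$, factor the CDF of the maximum using independence, differentiate to get the density $\tfrac{m}{2}\bigl(\tfrac{1+x}{2}\bigr)^{m-1}$, and integrate. The only cosmetic difference is that the paper evaluates the integral via an explicit antiderivative $\bigl(\tfrac{x+1}{2}\bigr)^m \tfrac{mx-1}{m+1}$, whereas you use the substitution $u=(x+1)/2$ (or the equivalent shift to uniforms on $[0,1]$); both are routine.
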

\begin{proof}
Define $\epsilon_a = Q_t(s,a) - Q_*(s,a)$; this is a uniform random variable in $[-1,1]$.  The probability that $\max_a Q_t(s,a) \le x$ for some $x$ is equal to the probability that $\epsilon_a \le x$ for all $a$ simultaneously.  Because the estimation errors are independent, we can derive
\begin{align*}
P( \max_a \epsilon_a \le x )
& = P( X_1 \le x \land X_2 \le x \land \ldots \land X_m \le x )\\
& = \prod_{a=1}^m P( \epsilon_a \le x )\,.
\end{align*}
The function $P( \epsilon_a \le x )$ is the cumulative distribution function (CDF) of $\epsilon_a$, which here is simply defined as
\[
P( \epsilon_a \le x ) = 
\left\{
\begin{array}{ll}
0 & \text{if $x \le -1$} \\
\frac{1+x}{2} & \text{if $x \in (-1,1)$} \\
1 & \text{if $x \ge 1$} \\
\end{array}
\right.
\]
This implies that
\begin{align*}
P( \max_a \epsilon_a \le x )
& = \prod_{a=1}^m P( \epsilon_a \le x )\\
& = 
\left\{
\begin{array}{ll}
0 & \text{if $x \le -1$} \\
\left(\frac{1+x}{2}\right)^m & \text{if $x \in (-1,1)$} \\
1 & \text{if $x \ge 1$} \\
\end{array}
\right.
\end{align*}
This gives us the CDF of the random variable $\max_a \epsilon_a$.  Its expectation can be written as an integral
\[
\E{ \max_a \epsilon_a } = \int_{-1}^{1} \! x f_{\max}(x)\,\text{d}x \,,
\]
where $f_{\max}$ is the probability density function of this variable, defined as the derivative of the CDF: $f_{\max}(x) = \frac{\text{d}}{\text{d}x} P( \max_a \epsilon_a \le x )$, so that for $x \in [-1,1]$ we have $f_{\max}(x) = \frac{m}{2} \left(\frac{1+x}{2}\right)^{m-1}$.  Evaluating the integral yields
\begin{align*}
\E{ \max_a \epsilon_a }
&= \int_{-1}^{1} \! x f_{\max}(x)\,\text{d}x \\
&= \left[ \left(\frac{x+1}{2}\right)^m \frac{m x - 1}{m+1} \right]_{-1}^1 \\
&= \frac{m-1}{m+1} \,. \qedhere
\end{align*}
\end{proof}

\section*{Experimental Details for the Atari 2600 Domain}

We selected the 49 games to match the list used by \citet{Mnih:2015}, see Tables below for the full list. Each agent step is composed of four frames (the last selected action is repeated during these frames) and reward values (obtained from the Arcade Learning Environment \citep{Bellemare:2013}) are clipped between -1 and 1. 

\subsection*{Network Architecture}

The convolution network used in the experiment is exactly the one proposed by proposed by \citet{Mnih:2015}, we only provide details here for completeness. The input to the network is a 84x84x4 tensor containing a rescaled, and gray-scale, version of the last four frames. The first convolution layer convolves the input with 32 filters of size 8 (stride 4), the second layer has 64 layers of size 4 (stride 2), the final convolution layer has 64 filters of size 3 (stride 1). 
This is followed by a fully-connected hidden layer of 512 units.
All these layers are separated by Rectifier Linear Units (ReLu). Finally, a fully-connected linear layer projects to the output of the network, i.e., the Q-values.
The optimization employed to train the network is RMSProp (with momentum parameter $0.95$). 

\subsection*{Hyper-parameters}

In all experiments, the discount was set to $\gamma=0.99$, and the learning rate to $\alpha=0.00025$. 
The number of steps between target network updates was $\tau=10,000$. Training is done over 50M steps (i.e., 200M frames). The agent is evaluated every 1M steps, and the best policy across these evaluations is kept as the output of the learning process. 
The size of the experience replay memory is 1M tuples. The memory gets sampled to update the network every 4 steps with minibatches of size 32.
The simple exploration policy used is an $\epsilon$-greedy policy with the $\epsilon$ decreasing linearly from 1 to $0.1$ over 1M steps.

\section*{Supplementary Results in the Atari 2600 Domain}
The Tables below provide further detailed results for our experiments in the Atari domain.

\begin{table*}[h]
\centering
\begin{tabular}{lrrrrr}
\textbf{Game} & \textbf{Random} & \textbf{Human} & \textbf{DQN} & \textbf{Double DQN} \\
Alien & 227.80 & 6875.40 & 3069.33 & 2907.30 \\
Amidar & 5.80 & 1675.80 & 739.50 & 702.10 \\
Assault & 222.40 & 1496.40 & 3358.63 & 5022.90 \\
Asterix & 210.00 & 8503.30 & 6011.67 & 15150.00 \\
Asteroids & 719.10 & 13156.70 & 1629.33 & 930.60 \\
Atlantis & 12850.00 & 29028.10 & 85950.00 & 64758.00 \\
Bank Heist & 14.20 & 734.40 & 429.67 & 728.30 \\
Battle Zone & 2360.00 & 37800.00 & 26300.00 & 25730.00 \\
Beam Rider & 363.90 & 5774.70 & 6845.93 & 7654.00 \\
Bowling & 23.10 & 154.80 & 42.40 & 70.50 \\
Boxing & 0.10 & 4.30 & 71.83 & 81.70 \\
Breakout & 1.70 & 31.80 & 401.20 & 375.00 \\
Centipede & 2090.90 & 11963.20 & 8309.40 & 4139.40 \\
Chopper Command & 811.00 & 9881.80 & 6686.67 & 4653.00 \\
Crazy Climber & 10780.50 & 35410.50 & 114103.33 & 101874.00 \\
Demon Attack & 152.10 & 3401.30 & 9711.17 & 9711.90 \\
Double Dunk & -18.60 & -15.50 & -18.07 & -6.30 \\
Enduro & 0.00 & 309.60 & 301.77 & 319.50 \\
Fishing Derby & -91.70 & 5.50 & -0.80 & 20.30 \\
Freeway & 0.00 & 29.60 & 30.30 & 31.80 \\
Frostbite & 65.20 & 4334.70 & 328.33 & 241.50 \\
Gopher & 257.60 & 2321.00 & 8520.00 & 8215.40 \\
Gravitar & 173.00 & 2672.00 & 306.67 & 170.50 \\
H.E.R.O. & 1027.00 & 25762.50 & 19950.33 & 20357.00 \\
Ice Hockey & -11.20 & 0.90 & -1.60 & -2.40 \\
James Bond & 29.00 & 406.70 & 576.67 & 438.00 \\
Kangaroo & 52.00 & 3035.00 & 6740.00 & 13651.00 \\
Krull & 1598.00 & 2394.60 & 3804.67 & 4396.70 \\
Kung-Fu Master & 258.50 & 22736.20 & 23270.00 & 29486.00 \\
Montezuma's Revenge & 0.00 & 4366.70 & 0.00 & 0.00 \\
Ms. Pacman & 307.30 & 15693.40 & 2311.00 & 3210.00 \\
Name This Game & 2292.30 & 4076.20 & 7256.67 & 6997.10 \\
Pong & -20.70 & 9.30 & 18.90 & 21.00 \\
Private Eye & 24.90 & 69571.30 & 1787.57 & 670.10 \\
Q*Bert & 163.90 & 13455.00 & 10595.83 & 14875.00 \\
River Raid & 1338.50 & 13513.30 & 8315.67 & 12015.30 \\
Road Runner & 11.50 & 7845.00 & 18256.67 & 48377.00 \\
Robotank & 2.20 & 11.90 & 51.57 & 46.70 \\
Seaquest & 68.40 & 20181.80 & 5286.00 & 7995.00 \\
Space Invaders & 148.00 & 1652.30 & 1975.50 & 3154.60 \\
Star Gunner & 664.00 & 10250.00 & 57996.67 & 65188.00 \\
Tennis & -23.80 & -8.90 & -2.47 & 1.70 \\
Time Pilot & 3568.00 & 5925.00 & 5946.67 & 7964.00 \\
Tutankham & 11.40 & 167.60 & 186.70 & 190.60 \\
Up and Down & 533.40 & 9082.00 & 8456.33 & 16769.90 \\
Venture & 0.00 & 1187.50 & 380.00 & 93.00 \\
Video Pinball & 16256.90 & 17297.60 & 42684.07 & 70009.00 \\
Wizard of Wor & 563.50 & 4756.50 & 3393.33 & 5204.00 \\
Zaxxon & 32.50 & 9173.30 & 4976.67 & 10182.00 \\
\end{tabular}
\caption{Raw scores for the no-op evaluation condition (5 minutes emulator time). DQN as given by \citet{Mnih:2015}.}
\end{table*}

\begin{table*}[h]
\centering
\begin{tabular}{lrrr}
\textbf{Game} & \textbf{DQN} & \textbf{Double DQN} \\
Alien & 42.75 \% & 40.31 \% \\
Amidar & 43.93 \% & 41.69 \% \\
Assault & 246.17 \% & 376.81 \% \\
Asterix & 69.96 \% & 180.15 \% \\
Asteroids & 7.32 \% & 1.70 \% \\
Atlantis & 451.85 \% & 320.85 \% \\
Bank Heist & 57.69 \% & 99.15 \% \\
Battle Zone & 67.55 \% & 65.94 \% \\
Beam Rider & 119.80 \% & 134.73 \% \\
Bowling & 14.65 \% & 35.99 \% \\
Boxing & 1707.86 \% & 1942.86 \% \\
Breakout & 1327.24 \% & 1240.20 \% \\
Centipede & 62.99 \% & 20.75 \% \\
Chopper Command & 64.78 \% & 42.36 \% \\
Crazy Climber & 419.50 \% & 369.85 \% \\
Demon Attack & 294.20 \% & 294.22 \% \\
Double Dunk & 17.10 \% & 396.77 \% \\
Enduro & 97.47 \% & 103.20 \% \\
Fishing Derby & 93.52 \% & 115.23 \% \\
Freeway & 102.36 \% & 107.43 \% \\
Frostbite & 6.16 \% & 4.13 \% \\
Gopher & 400.43 \% & 385.66 \% \\
Gravitar & 5.35 \% & -0.10 \% \\
H.E.R.O. & 76.50 \% & 78.15 \% \\
Ice Hockey & 79.34 \% & 72.73 \% \\
James Bond & 145.00 \% & 108.29 \% \\
Kangaroo & 224.20 \% & 455.88 \% \\
Krull & 277.01 \% & 351.33 \% \\
Kung-Fu Master & 102.37 \% & 130.03 \% \\
Montezuma's Revenge & 0.00 \% & 0.00 \% \\
Ms. Pacman & 13.02 \% & 18.87 \% \\
Name This Game & 278.29 \% & 263.74 \% \\
Pong & 132.00 \% & 139.00 \% \\
Private Eye & 2.53 \% & 0.93 \% \\
Q*Bert & 78.49 \% & 110.68 \% \\
River Raid & 57.31 \% & 87.70 \% \\
Road Runner & 232.91 \% & 617.42 \% \\
Robotank & 508.97 \% & 458.76 \% \\
Seaquest & 25.94 \% & 39.41 \% \\
Space Invaders & 121.49 \% & 199.87 \% \\
Star Gunner & 598.09 \% & 673.11 \% \\
Tennis & 143.15 \% & 171.14 \% \\
Time Pilot & 100.92 \% & 186.51 \% \\
Tutankham & 112.23 \% & 114.72 \% \\
Up and Down & 92.68 \% & 189.93 \% \\
Venture & 32.00 \% & 7.83 \% \\
Video Pinball & 2539.36 \% & 5164.99 \% \\
Wizard of Wor & 67.49 \% & 110.67 \% \\
Zaxxon & 54.09 \% & 111.04 \% \\
\end{tabular}
\caption{\label{table:at5_norm} Normalized results for no-op evaluation condition (5 minutes emulator time).}
\end{table*}

\begin{table*}[h]
\centering
\begin{tabular}{lrrrrr}
\textbf{Game} & \textbf{Random} & \textbf{Human} & \textbf{DQN} & \textbf{Double DQN}  & \textbf{Double DQN (tuned)} \\
Alien & 128.30 & 6371.30 & 570.2 & 621.6 & 1033.4 \\
Amidar & 11.80 & 1540.40 & 133.4 & 188.2 & 169.1 \\
Assault & 166.90 & 628.90 & 3332.3 & 2774.3 & 6060.8 \\
Asterix & 164.50 & 7536.00 & 124.5 & 5285.0 & 16837.0 \\
Asteroids & 871.30 & 36517.30 & 697.1 & 1219.0 & 1193.2 \\
Atlantis & 13463.00 & 26575.00 & 76108.0 & 260556.0 & 319688.0 \\
Bank Heist & 21.70 & 644.50 & 176.3 & 469.8 & 886.0 \\
Battle Zone & 3560.00 & 33030.00 & 17560.0 & 25240.0 & 24740.0 \\
Beam Rider & 254.60 & 14961.00 & 8672.4 & 9107.9 & 17417.2 \\
Berzerk & 196.10 & 2237.50 &  & 635.8 & 1011.1 \\
Bowling & 35.20 & 146.50 & 41.2 & 62.3 & 69.6 \\
Boxing & -1.50 & 9.60 & 25.8 & 52.1 & 73.5 \\
Breakout & 1.60 & 27.90 & 303.9 & 338.7 & 368.9 \\
Centipede & 1925.50 & 10321.90 & 3773.1 & 5166.6 & 3853.5 \\
Chopper Command & 644.00 & 8930.00 & 3046.0 & 2483.0 & 3495.0 \\
Crazy Climber & 9337.00 & 32667.00 & 50992.0 & 94315.0 & 113782.0 \\
Defender & 1965.50 & 14296.00 &  & 8531.0 & 27510.0 \\
Demon Attack & 208.30 & 3442.80 & 12835.2 & 13943.5 & 69803.4 \\
Double Dunk & -16.00 & -14.40 & -21.6 & -6.4 & -0.3 \\
Enduro & -81.80 & 740.20 & 475.6 & 475.9 & 1216.6 \\
Fishing Derby & -77.10 & 5.10 & -2.3 & -3.4 & 3.2 \\
Freeway & 0.10 & 25.60 & 25.8 & 26.3 & 28.8 \\
Frostbite & 66.40 & 4202.80 & 157.4 & 258.3 & 1448.1 \\
Gopher & 250.00 & 2311.00 & 2731.8 & 8742.8 & 15253.0 \\
Gravitar & 245.50 & 3116.00 & 216.5 & 170.0 & 200.5 \\
H.E.R.O. & 1580.30 & 25839.40 & 12952.5 & 15341.4 & 14892.5 \\
Ice Hockey & -9.70 & 0.50 & -3.8 & -3.6 & -2.5 \\
James Bond & 33.50 & 368.50 & 348.5 & 416.0 & 573.0 \\
Kangaroo & 100.00 & 2739.00 & 2696.0 & 6138.0 & 11204.0 \\
Krull & 1151.90 & 2109.10 & 3864.0 & 6130.4 & 6796.1 \\
Kung-Fu Master & 304.00 & 20786.80 & 11875.0 & 22771.0 & 30207.0 \\
Montezuma's Revenge & 25.00 & 4182.00 & 50.0 & 30.0 & 42.0 \\
Ms. Pacman & 197.80 & 15375.00 & 763.5 & 1401.8 & 1241.3 \\
Name This Game & 1747.80 & 6796.00 & 5439.9 & 7871.5 & 8960.3 \\
Phoenix & 1134.40 & 6686.20 &  & 10364.0 & 12366.5 \\
Pit Fall & -348.80 & 5998.90 &  & -432.9 & -186.7 \\
Pong & -18.00 & 15.50 & 16.2 & 17.7 & 19.1 \\
Private Eye & 662.80 & 64169.10 & 298.2 & 346.3 & -575.5 \\
Q*Bert & 183.00 & 12085.00 & 4589.8 & 10713.3 & 11020.8 \\
River Raid & 588.30 & 14382.20 & 4065.3 & 6579.0 & 10838.4 \\
Road Runner & 200.00 & 6878.00 & 9264.0 & 43884.0 & 43156.0 \\
Robotank & 2.40 & 8.90 & 58.5 & 52.0 & 59.1 \\
Seaquest & 215.50 & 40425.80 & 2793.9 & 4199.4 & 14498.0 \\
Skiing & -15287.40 & -3686.60 &  & -29404.3 & -11490.4 \\
Solaris & 2047.20 & 11032.60 &  & 2166.8 & 810.0 \\
Space Invaders & 182.60 & 1464.90 & 1449.7 & 1495.7 & 2628.7 \\
Star Gunner & 697.00 & 9528.00 & 34081.0 & 53052.0 & 58365.0 \\
Surround & -9.70 & 5.40 &  & -7.6 & 1.9 \\
Tennis & -21.40 & -6.70 & -2.3 & 11.0 & -7.8 \\
Time Pilot & 3273.00 & 5650.00 & 5640.0 & 5375.0 & 6608.0 \\
Tutankham & 12.70 & 138.30 & 32.4 & 63.6 & 92.2 \\
Up and Down & 707.20 & 9896.10 & 3311.3 & 4721.1 & 19086.9 \\
Venture & 18.00 & 1039.00 & 54.0 & 75.0 & 21.0 \\
Video Pinball & 20452.0 & 15641.10 & 20228.1 & 148883.6 & 367823.7 \\
Wizard of Wor & 804.00 & 4556.00 & 246.0 & 155.0 & 6201.0 \\
Yars Revenge & 1476.90 & 47135.20 &  & 5439.5 & 6270.6 \\
Zaxxon & 475.00 & 8443.00 & 831.0 & 7874.0 & 8593.0 \\
\end{tabular}
\caption{Raw scores for the human start condition (30 minutes emulator time). DQN as given by \citet{Nair:2015}.}
\end{table*}

\begin{table*}[h]
\centering
\begin{tabular}{lrrr}
\textbf{Game} & \textbf{DQN} & \textbf{Double DQN} & \textbf{Double DQN (tuned)} \\
Alien & 7.08\% & 7.90\% & 14.50\% \\
Amidar & 7.95\% & 11.54\% & 10.29\% \\
Assault & 685.15\% & 564.37\% & 1275.74\% \\
Asterix & -0.54\% & 69.46\% & 226.18\% \\
Asteroids & -0.49\% & 0.98\% & 0.90\% \\
Atlantis & 477.77\% & 1884.48\% & 2335.46\% \\
Bank Heist & 24.82\% & 71.95\% & 138.78\% \\
Battle Zone & 47.51\% & 73.57\% & 71.87\% \\
Beam Rider & 57.24\% & 60.20\% & 116.70\% \\
Berzerk &  & 21.54\% & 39.92\% \\
Bowling & 5.39\% & 24.35\% & 30.91\% \\
Boxing & 245.95\% & 482.88\% & 675.68\% \\
Breakout & 1149.43\% & 1281.75\% & 1396.58\% \\
Centipede & 22.00\% & 38.60\% & 22.96\% \\
Chopper Command & 28.99\% & 22.19\% & 34.41\% \\
Crazy Climber & 178.55\% & 364.24\% & 447.69\% \\
Defender &  & 53.25\% & 207.17\% \\
Demon Attack & 390.38\% & 424.65\% & 2151.65\% \\
Double Dunk & -350.00\% & 600.00\% & 981.25\% \\
Enduro & 67.81\% & 67.85\% & 157.96\% \\
Fishing Derby & 91.00\% & 89.66\% & 97.69\% \\
Freeway & 100.78\% & 102.75\% & 112.55\% \\
Frostbite & 2.20\% & 4.64\% & 33.40\% \\
Gopher & 120.42\% & 412.07\% & 727.95\% \\
Gravitar & -1.01\% & -2.63\% & -1.57\% \\
H.E.R.O. & 46.88\% & 56.73\% & 54.88\% \\
Ice Hockey & 57.84\% & 59.80\% & 70.59\% \\
James Bond & 94.03\% & 114.18\% & 161.04\% \\
Kangaroo & 98.37\% & 228.80\% & 420.77\% \\
Krull & 283.34\% & 520.11\% & 589.66\% \\
Kung-Fu Master & 56.49\% & 109.69\% & 145.99\% \\
Montezuma's Revenge & 0.60\% & 0.12\% & 0.41\% \\
Ms. Pacman & 3.73\% & 7.93\% & 6.88\% \\
Name This Game & 73.14\% & 121.30\% & 142.87\% \\
Phoenix &  & 166.25\% & 202.31\% \\
Pit Fall &  & -1.32\% & 2.55\% \\
Pong & 102.09\% & 106.57\% & 110.75\% \\
Private Eye & -0.57\% & -0.50\% & -1.95\% \\
Q*Bert & 37.03\% & 88.48\% & 91.06\% \\
River Raid & 25.21\% & 43.43\% & 74.31\% \\
Road Runner & 135.73\% & 654.15\% & 643.25\% \\
Robotank & 863.08\% & 763.08\% & 872.31\% \\
Seaquest & 6.41\% & 9.91\% & 35.52\% \\
Skiing &  & -121.69\% & 32.73\% \\
Solaris &  & 1.33\% & -13.77\% \\
Space Invaders & 98.81\% & 102.40\% & 190.76\% \\
Star Gunner & 378.03\% & 592.85\% & 653.02\% \\
Surround &  & 13.91\% & 76.82\% \\
Tennis & 129.93\% & 220.41\% & 92.52\% \\
Time Pilot & 99.58\% & 88.43\% & 140.30\% \\
Tutankham & 15.68\% & 40.53\% & 63.30\% \\
Up and Down & 28.34\% & 43.68\% & 200.02\% \\
Venture & 3.53\% & 5.58\% & 0.29\% \\
Video Pinball & -4.65\% & 2669.60\% & 7220.51\% \\
Wizard of Wor & -14.87\% & -17.30\% & 143.84\% \\
Yars Revenge &  & 8.68\% & 10.50\% \\
Zaxxon & 4.47\% & 92.86\% & 101.88\% \\
\end{tabular}
\caption{\label{table:at30_rnd_norm} Normalized scores for the human start condition (30 minutes emulator time).}
\end{table*}

\end{document}